\newacronym{gnn}{GNN}{Graph Neural Network}
\newacronym{ood}{OOD}{out-of-distribution}
\newcommand{\revision}[1]{{\color{black}#1}}
\theoremstyle{plain}
\theoremstyle{definition}
\theoremstyle{remark}
\newcommand{\xs}[1]{\ensuremath{X^s_{#1}}}
\newcommand{\xc}[1]{\ensuremath{X^c_{#1}}}
\newcommand{\y}[1]{\ensuremath{Y_{#1}}}
\newcommand{\yref}[1]{\ensuremath{Y^R_{#1}}}
\newcommand{\ourmethod}{Hint-ReLIC}
\DeclareMathOperator*{\expec}{\mathbb E}
\icmltitlerunning{Neural Algorithmic Reasoning with Causal Regularisation}
\begin{document}

\twocolumn[
\icmltitle{Neural Algorithmic Reasoning with Causal Regularisation}



\icmlsetsymbol{equal}{*}
\icmlsetsymbol{equal_adv}{\dag}

\begin{icmlauthorlist}
\icmlauthor{Beatrice Bevilacqua}{yyy}
\icmlauthor{Kyriacos Nikiforou}{equal,comp}
\icmlauthor{Borja Ibarz}{equal,comp}
\icmlauthor{Ioana Bica}{comp}
\icmlauthor{Michela Paganini}{comp}
\icmlauthor{Charles Blundell}{comp}
\icmlauthor{Jovana Mitrovic}{equal_adv,comp}
\icmlauthor{Petar Veli\v{c}kovi\'{c}}{equal_adv,comp}
\end{icmlauthorlist}

\icmlaffiliation{yyy}{Purdue University}
\icmlaffiliation{comp}{DeepMind}

\icmlcorrespondingauthor{Beatrice Bevilacqua}{bbevilac@purdue.edu}

\icmlkeywords{Machine Learning, ICML}

\vskip 0.3in
]



\printAffiliationsAndNotice{\icmlEqualContribution \icmlEqualAdvising} 

\begin{abstract}
Recent work on neural algorithmic reasoning has investigated the reasoning capabilities of neural networks, effectively demonstrating they can learn to execute classical algorithms on unseen data coming from the train distribution.
However, the performance of existing neural reasoners significantly degrades on out-of-distribution (OOD) test data, where inputs have larger sizes.
In this work, we make an important observation: there are many \emph{different} inputs for which an algorithm will perform certain intermediate computations \emph{identically}. This insight allows us to develop data augmentation procedures that, given an algorithm's intermediate trajectory, produce inputs for which the target algorithm would have \emph{exactly} the same next trajectory step.
\revision{We ensure invariance in the next-step prediction across such inputs, by employing a self-supervised objective derived by our observation, formalised in a causal graph.
We prove that the resulting method, which we call \ourmethod,} improves the OOD generalisation capabilities of the reasoner. We evaluate our method on the CLRS algorithmic reasoning benchmark, where we show up to 3$\times$ improvements on the OOD test data. 
%
\end{abstract}

\section{Introduction}

Recent works advocate for building neural networks that can reason \citep{xu2019can,xu2021how,velivckovic2021neural,velivckovic2022clrs}. 
Therein, it is posited that combining the robustness of algorithms with the flexibility of neural networks can help us accelerate progress towards models that can tackle a wide range of tasks with real world impact \citep{davies2021advancing,deac2021neural,velickovic2022reasoning,bansal2022end,beurer2022learning}.
The rationale is that, if a model learns how to reason, or learns to execute an algorithm, it should be able to apply that reasoning, or algorithm, to a completely novel problem, even in a different domain.
Specifically, if a model has learnt an algorithm, it should be gracefully applicable on out-of-distribution (OOD) examples, which are substantially different from the examples in the training set, and return correct outputs for them. 
This is because an algorithm---and reasoning in general---is a sequential, step-by-step process, where a simple decision is made in each step based on outputs of the previous computation. 

Prior work \citep{diao2022relational,dudzik2022graph,ibarz2022a,mahdavi2022towards} has explored this setup, using the CLRS-30 benchmark~\citep{velivckovic2022clrs}, and showed that while many algorithmic tasks can be learned by Graph Neural Network (GNN) processors in a way that generalises to larger problem instances, there are still several algorithms where this could not be achieved.

Importantly, CLRS-30 also provides ground-truth \emph{hints} for every algorithm. Hints correspond to the state of different variables employed to solve the algorithm (e.g. positions, pointers, colouring of nodes) along its trace. Such hints can optionally be used during training, but are not available during evaluation.
In previous work, they have mainly been used as auxiliary targets together with the  algorithm output. The prevailing hypothesis is that gradients coming from predicting these additional relevant signals will help constrain the representations in the neural algorithmic executor and prevent overfitting. Predicted hints can also be optionally fed back into the model to provide additional context and aid their prediction at the next step. 

\begin{figure*}[ht!!]
    \includegraphics[width=\linewidth]{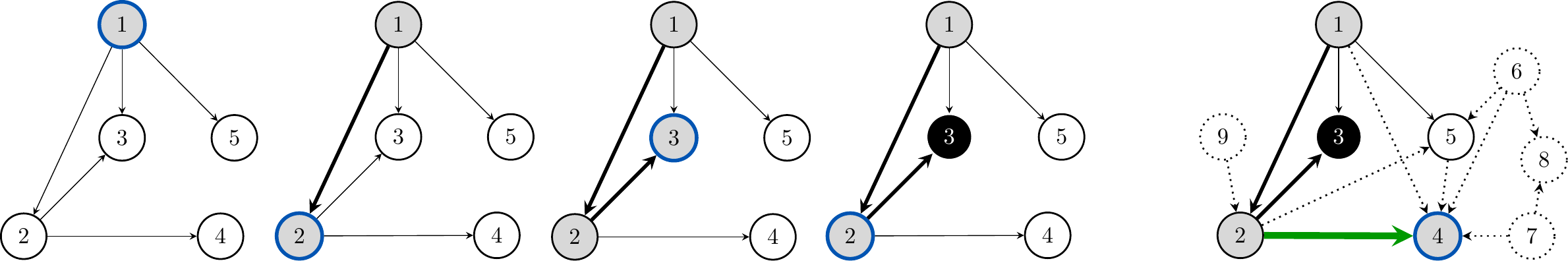}
    \caption{An illustration of the key observation of our work, on the depth-first search (DFS) algorithm as implemented in CLRS-30 \citep{velivckovic2022clrs}. On the left, the first four steps of DFS are visualised. At each step, DFS explores the unvisited neighbour with the smallest index, and backtracks if no unexplored neighbours exist. The next computational step---\emph{assigning $2$ as the parent of $4$}---is bound to happen, even under many \emph{transformations} of this graph. For example, if we were to insert new (dashed) nodes and edges into the graph, this step would still proceed as expected. Capturing this computational invariance property is the essence of our paper.}\label{fig:key}
\end{figure*}

In practice, while utilising hints in this way does lead to models that \emph{follow} the algorithmic trajectory better, they have had a less substantial impact on the accuracy of the predicted \emph{final output}. This is likely due to the advent of powerful strategies such as recall \citep{bansal2022end}, wherein the input is fed back to the model at every intermediate step, constantly ``reminding'' the model of the problem that needs to be solved. The positive effect of recall on the final output accuracy has been observed on many occasions \citep{mahdavi2022towards}, and outweighs the contribution from directly predicting hints and feeding them back.

In this work, we propose a method, namely \ourmethod, that decisively demonstrates an advantage to using hints.
We base our work on the observation that there are many different inputs for which an algorithm will make \emph{identical} computations at a certain step (Figure \ref{fig:key}). For example, applying the bubble sort algorithm from the left on $[2,1,3]$ or $[2,1,\underbar{5},3]$ will result in the same first step computation: a comparison of $2$ and $1$, followed by swapping them. Conversely, the first step of execution would be different for inputs $[2,1,3]$ and $[2,\underbar{5},1,3]$; the latter input would trigger a comparison of $2$ and $5$ without swapping them.
This observation allows us to move beyond the conventional way of using hints, i.e. autoregressively predicting them~\citep{velivckovic2022clrs}. Instead, we design a novel way that learns more informative representations that enable the networks to more faithfully execute algorithms. Specifically, we learn representations that are similar for inputs that result in identical intermediate computation.
First, we design a causal graph in order to formally model an algorithmic execution trajectory.
Based on this, we derive a self-supervised objective for learning hint representations that are invariant across inputs having the same computational step. 
Moreover, we prove that this procedure will result in stronger causally-invariant representations.
\paragraph{Contributions.}
Our three key contributions are as follows:
\begin{enumerate}
    \item We design a \emph{causal graph} capturing the observation that the execution of an algorithm at a certain step is determined only by a subset of the input;
    \item Motivated by our causal graph, we present a \emph{self-supervised objective} to learn representations that are \emph{provably} invariant to changes in the input subset that does not affect the computational step;
    \item We test our model, dubbed \ourmethod, on the CLRS-30 algorithmic reasoning benchmark~\citep{velivckovic2022clrs}, demonstrating a \emph{significant improvement} in out-of-distribution generalisation over the recently published state-of-the-art \citep{ibarz2022a}.
\end{enumerate}

\section{Related Work}
\paragraph{GNNs and invariance to size shifts.}
Graph Neural Networks (GNNs) constitute a popular class of methods for learning representations of graph data, and they have been successfully applied to solve a variety of problems.
We refer the reader to \citet{bronstein2021geometric, jegelka2022theory} for a thorough understanding of GNN concepts.
While GNNs are designed to work on graphs of any size, recent work has empirically shown poor size-generalisation capabilities of standard methods, mainly in the context of molecular modeling~\citep{Gasteiger2022HowDG}, graph property prediction~\citep{corso2020principal}, and in executing specific graph algorithms~\citep{velivckovic2019neural,joshi2020LearningTT}. A theoretical study of failure cases has been recently provided in \citet{xu2021how}, with a focus on a geometrical interpretation of OOD generalisation. In order to learn models performing equally well in- and out-of-distribution, \citet{bevilacqua2021size,chen2022learning,zhou2022ood} designed ad-hoc solutions satisfying assumed causal assumptions. However, these models are not applicable to our setting, as the assumptions on our data generation process are significantly different. With the same motivation, \citet{buffellisizeshiftreg} introduced a regularisation strategy to improve generalisation to larger sizes, while \citet{yehudai2021local} proposed a semi-supervised and a self-supervised objective that assume access to the test distribution. However, these models are not designed to work on algorithmic data, where OOD generalisation is still underexplored.

\paragraph{Neural Algorithmic Reasoning.}
In order to learn to execute algorithmic tasks, a neural network must include a \emph{recurrent} component simulating the individual algorithmic steps. This component is applied a variable number of times, as required by the size of the input and the problem at hand. The recurrent component can be an LSTM~\citep{gers2001LSTMRN}, possibly augmented with a memory as in Neural Turing Machines~\citep{graves2014NeuralTM, graves2016hybrid}; it could exploit spatial invariances in the algorithmic task through a convolutional architecture~\citep{bansal2022end}; it could be based on the transformer self-attentional architecture, as in the Universal Transformer~\citep{dehghani2018UT}; or it could be a Graph Neural Network (GNN).
%
GNNs are particularly well suited for algorithmic execution \citep{velivckovic2019neural,xu2019can}, and they have been applied to algorithmic problems before with a focus on extrapolation capabilities~\citep{palm2017RecurrentRN,selsam2018LearningAS,joshi2020LearningTT, tang2020TowardsSG}. Recently,~\citet{velivckovic2021neural} have proposed a general framework for algorithmic learning with GNNs. To reconcile different data encodings and provide a unified evaluation procedure, ~\citet{velivckovic2022clrs} have presented a benchmark of algorithmic tasks covering a variety of areas.
This benchmark, namely the CLRS algorithmic benchmark, represents data as graphs, showing that the graph formulation is general enough to include several algorithms, and not just the graph-based ones. On the CLRS benchmark, \citet{ibarz2022a} has recently presented several improvements in the architecture and learning procedure in order to obtain better performances. However, even the latest state-of-the-art models suffer from performance drops in certain algorithms when going out-of-distribution, an aspect we wish to improve upon here.

\paragraph{Self-supervised learning.} 
Recently, many self-supervised representation learning methods that achieve good performance on a wide range of downstream vision tasks without access to labels have been proposed.
One of the most popular approaches relies on contrastive objectives that make use of data augmentations to solve the instance discrimination task \citep{wu2018unsupervised, chen2020simple, he2020momentum, mitrovic2021representation}. Other approaches that rely on target networks and clustering have also been explored~\citep{grill2020bootstrap,caron2020unsupervised}. Our work is similar in spirit to \citet{mitrovic2021representation}, which examines representation learning through the lens of causality and employs techniques from invariant prediction to make better use of data augmentations.
This approach has been demonstrated to be extremely successful on vision tasks \citep{tomasev2022pushing}.
In the context of graphs,  \citet{You2020GraphCL,suresh2021adversarial,you2022bringing} have studied how to learn contrastive representations, with particular attention paid to data augmentations. Moreover, \citet{velickovic2018deep,zhu2020deep} proposed novel  objectives based on mutual information maximization in the graph domain to learn representations.
Several other self-supervised methods (e.g.\ \citet{thakoor2021bootstrapped}) have also been studied, and we refer the reader to \citet{xie2022self} for a review of existing literature \revision{on self-supervision with GNNs}.

\section{Causal Model for Algorithmic Trajectories}
\begin{figure}[t]
\vskip 0.2in
\begin{center}
\centerline{\includegraphics[width=\columnwidth]{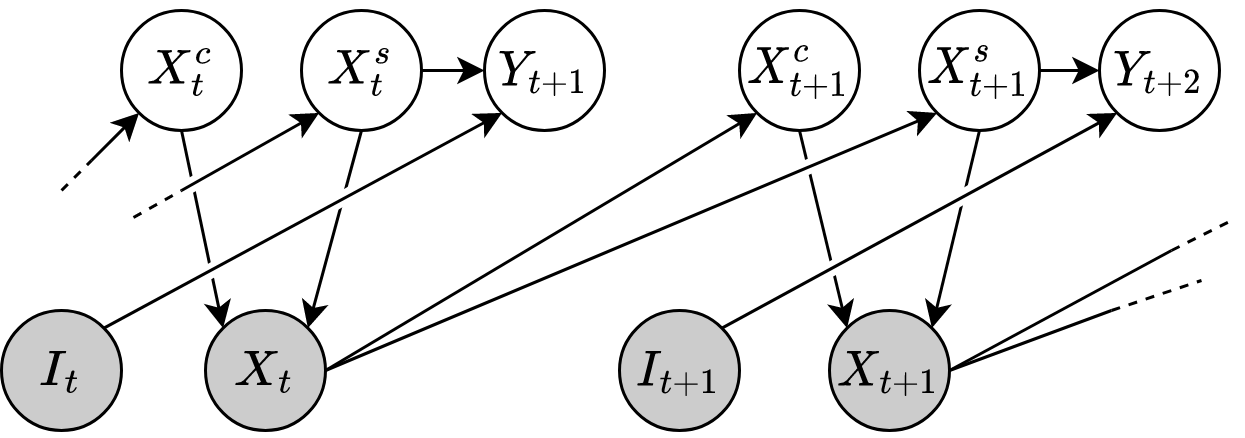}}
\caption{The causal graph formalising our assumption about the outcome of a step depends only on a subset $\xs{t}$ of the snapshot $X_t$, while the remainder $\xc{t}$ of the snapshot can be arbitrarily different.}
\label{fig:data-gen}
\end{center}
\vskip -0.2in
\end{figure}
An algorithm's execution trajectory is described in terms of the \emph{inputs}, \emph{outputs} and \emph{hints}, which represent intermediate steps in the execution. We consider a graph-oriented way of representing this data~\citep{velivckovic2022clrs}: inputs and outputs are presented as data on nodes and edges of a graph, and hints are encoded as node, edge or graph features changing over time steps.

To better understand the data at hand, we propose to formalise the data generation process for an algorithmic trajectory using a \emph{causal graph}. In such a causal graph, nodes represent random variables, and incoming arrows indicate that the node is a function of its parents \citep{pearl2009causality}.
The causal graph we use can be found in \Cref{fig:data-gen}. Note that this graph does not represent input data for the model, but a way of describing how any such data is generated.

Let us consider the execution trajectory of a certain algorithm of interest, at a particular time step $t$. Assume $X_1$ to be the observed input, and let $X_t$ be the random variable denoting the ``snapshot'' at step $t$ of the algorithm execution on the input. For example, in bubble sort, $X_1$ will be the initial (unsorted) array, and $X_t$ the array after $t$ steps of the sorting procedure (thus a partially-sorted array).

The \emph{key contribution} of our causal graph is modelling the assumption that outcomes of a particular execution step depend only on a subset of the current snapshot, while the remainder of the snapshot can be arbitrarily different. 

Accordingly, we assume the snapshot $X_t$ to be generated from \emph{two} random variables, $\xc{t}$ and $\xs{t}$, with $\xc{t}$ representing the part of the snapshot that does not influence the current execution step (what can be changed without affecting the execution), while $\xs{t}$ the one that determines it (what needs to be stable). 

\begin{figure*}[t]
\vskip 0.2in
\begin{center}
\centerline{\includegraphics[width=\textwidth]{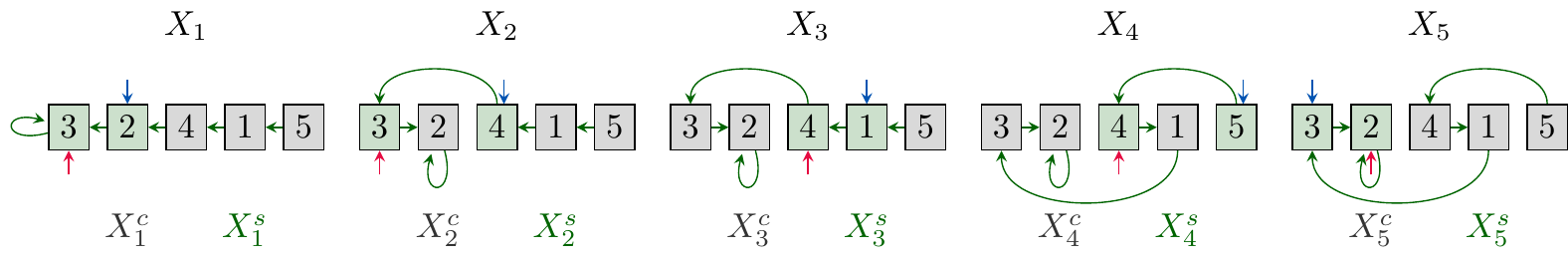}}
\caption{Example of values of $\xc{t}$ and $\xs{t}$ on an input array in the execution of the bubble sort algorithm. At every step of computation, bubble sort compares and possibly swaps exactly \emph{two} nodes---those nodes are the only ones determining the outcome of the current step, and hence they constitute $\xs{t}$. All other nodes are part of $\xc{t}$.}
\label{fig:bubblesort}
\end{center}
\vskip -0.2in
\end{figure*}

Let us now revisit our bubble sort example from this perspective (see \Cref{fig:bubblesort}). At each execution step, bubble sort compares two adjacent elements of the input list, and swaps them if they are not correctly ordered. Hence, in this particular example, $\xs{t}$ constitutes these two elements being compared at step $t$, while the remaining elements---which do not affect whether or not a swap is going to happen at time $t$---form $\xc{t}$. By definition this implies that the next algorithm state is a function of \emph{only} $\xs{t}$.

The data encoding used by~\citet{velivckovic2022clrs} prescribes that hints have values provided in \emph{all} relevant parts of the graph. That is, in a graph of $n$ nodes, an $m$-dimensional node hint has shape $\mathbb{R}^{n\times m}$, and an $m$-dimensional edge hint has shape $\mathbb{R}^{n\times n\times m}$. However, in order to keep our causal model simple, we choose to track the next-step hint in \emph{only one} of those values, using an \emph{index}, $I_t$, to decide which. Specifically, $I_t\in\{1, 2, \dots, n\}$ are possible indices for node-level hints, and $I_t\in\{(1, 1), (1, 2), \dots, (1, n), (2, 1), \dots, (2, n), \dots, (n, n)\}$ are possible indices for edge-level hints. For the indexed node/edge only, our causal graph then tracks the next-step value of the hint (either no change from the previous step or the new value), which we denote by $\y{t+1}$.

Returning once again to our bubble sort example: one specific hint being tracked by the algorithm is which two nodes in the input list are currently considered for a swap. If $I_2=4$, then $\y{3}$ will track whether node $4$ is being considered for a swap, immediately after two steps of the bubble sort algorithm have been executed.

Once step $t$ of the algorithm has been executed, a new snapshot $X_{t+1}$ is produced, and it can be decomposed into $\xc{t+1}$ and $\xs{t+1}$, just as before. Note that the execution in CLRS-30 is assumed \emph{Markovian} \citep{velivckovic2022clrs}: the snapshot at step $t$ contains all the information to determine the snapshot at the next step. Finally, the execution terminates after $T$ steps, and the final output is produced. We can then represent the output in a particular node/edge---indexed by $I_T$, just as before---by $\y{T+1}^o := g(\xs{T}, I_T)$, with $g$ being the function producing the algorithm output.

As can be seen in \Cref{fig:data-gen}, $\xs{t}$ has all the necessary information to predict $\y{t+1}$, since our causal model encodes the conditional independence assumption $\y{t+1} \perp \xc{t} \, \vert \, \xs{t} $. More importantly, using the independence of mechanisms~\citep{peters2017elements} we can conclude that under this causal model, performing interventions on $\xc{t}$ by changing its value, does not change the conditional distribution $P(\y{t+1} \,  \vert \, \xs{t})$.
Note that this is exactly the formalisation of our initial intuition: \emph{the output of a particular step of the algorithm (i.e.,\ $\y{t+1}$) depends only on a subset of the current snapshot (i.e., $\xs{t}$), and thus it is not affected by the addition of input items that do not interfere with it} (which we formalise as an \textbf{intervention} on $\xc{t}$).\footnote{In bubble sort, adding sorted keys at the end of the array does not affect whether we are swapping the current entries.} Therefore, given a step $t \in [1 \dots T]$, for all $x, x' \in \mathcal{X}^c_t$, where $\mathcal{X}^c_t$ denotes the domain of $\xc{t}$, we have that $\xs{t}$ is an \emph{invariant} predictor of $\y{t+1}$ under interventions on $\xc{t}$:
\begin{align}
\label{eq:invariantx}
    p^{\text{do}(\xc{t})=x}(\y{t+1} \vert \xs{t}) = p^{\text{do}(\xc{t})=x'}(\y{t+1} \vert \xs{t}),
\end{align}
where $p^{\text{do}(\xc{t})=x}$ denotes the distribution obtained from assigning $\xc{t}$ the value of $x$, i.e.\ the interventional distribution.

Note, however, that \Cref{eq:invariantx} does not give us a practical way of ensuring that our neural algorithmic reasoner respects these causal invariances, because it only has access to the entirety of the current snapshot $X_t$, without knowing its specific subsets $\xc{t}$ and $\xs{t}$. More precisely, it is generally not known \emph{which} input elements constitute $\xs{t}$. \revision{For this reason, $\xc{t}$ and $\xs{t}$ are represented as unobserved random variables (white nodes) in \Cref{fig:data-gen}.} In the next section, we will describe how to ensure invariant predictions for our reasoner, leveraging only $X_t$.

\section{Size-Invariance through Self-Supervision in Neural Algorithmic Reasoning}\label{sec:self-sup}
Given a step $t$, to ensure invariant predictions of $\y{t+1}$ without access to $\xs{t}$, we construct a \emph{refinement} task $\yref{t+1}$ and learn a representation $f(X_t, I_t)$ to predict $\yref{t+1}$, as originally proposed for images in~\citet{mitrovic2021representation}. A refinement for a task \citep{chalupka2014visual} represents a more fine-grained version of the initial task.

More formally, given two tasks $R : \mathcal{A} \rightarrow \mathcal{B}$ and $T : \mathcal{A} \rightarrow \mathcal{B}^\prime$,
task $R$ is more (or equally) fine-grained than task $T$ if, for any two elements $a, a^\prime \in \mathcal{A}$, $R(a) = R(a^\prime) \implies T(a) = T(a^\prime)$.

We will use this concept to show that a representation learned on the refinement task can be effectively used in the original task. 
Note that, as for $\y{t+1}$, we assume $f(X_t, I_t)$ to be the representation learned from $X_t$ of a predefined hint value---indexed by $I_t$---for example, the representation of the predecessor of a specific element of the input list.

Given a step $t$, let $\yref{t+1}$ be a refinement of $\y{t+1}$, and let $f(X_t, I_t)$ be a representation learned from $X_t$, used for the prediction of the refinement (see \Cref{fig:causal}).   
As we will formally prove, a representation that is invariant in the prediction of the refinement task across changes in $\xc{t}$ is also invariant in the prediction of the algorithmic step under these changes. Therefore, optimising $f(X_t, I_t)$ to be an invariant predictor for the refinement task $\yref{t+1}$ represents a \emph{sufficient} condition for the invariance in the prediction of the next algorithmic state, $\y{t+1}$.

In the next subsection we present how to learn $f(X_t, I_t)$ in order to be an invariant predictor of $\yref{t+1}$ under changes in $\xc{t}$. Then, we show that this represents a sufficient condition for $f(X_t, I_t)$ to be an invariant predictor of $\y{t+1}$ across changes in $\xc{t}$.

\subsection{Learning an invariant predictor of the refinement}
\begin{figure}[t]
\vskip 0.2in
\begin{center}
\centerline{\includegraphics[width=\columnwidth]{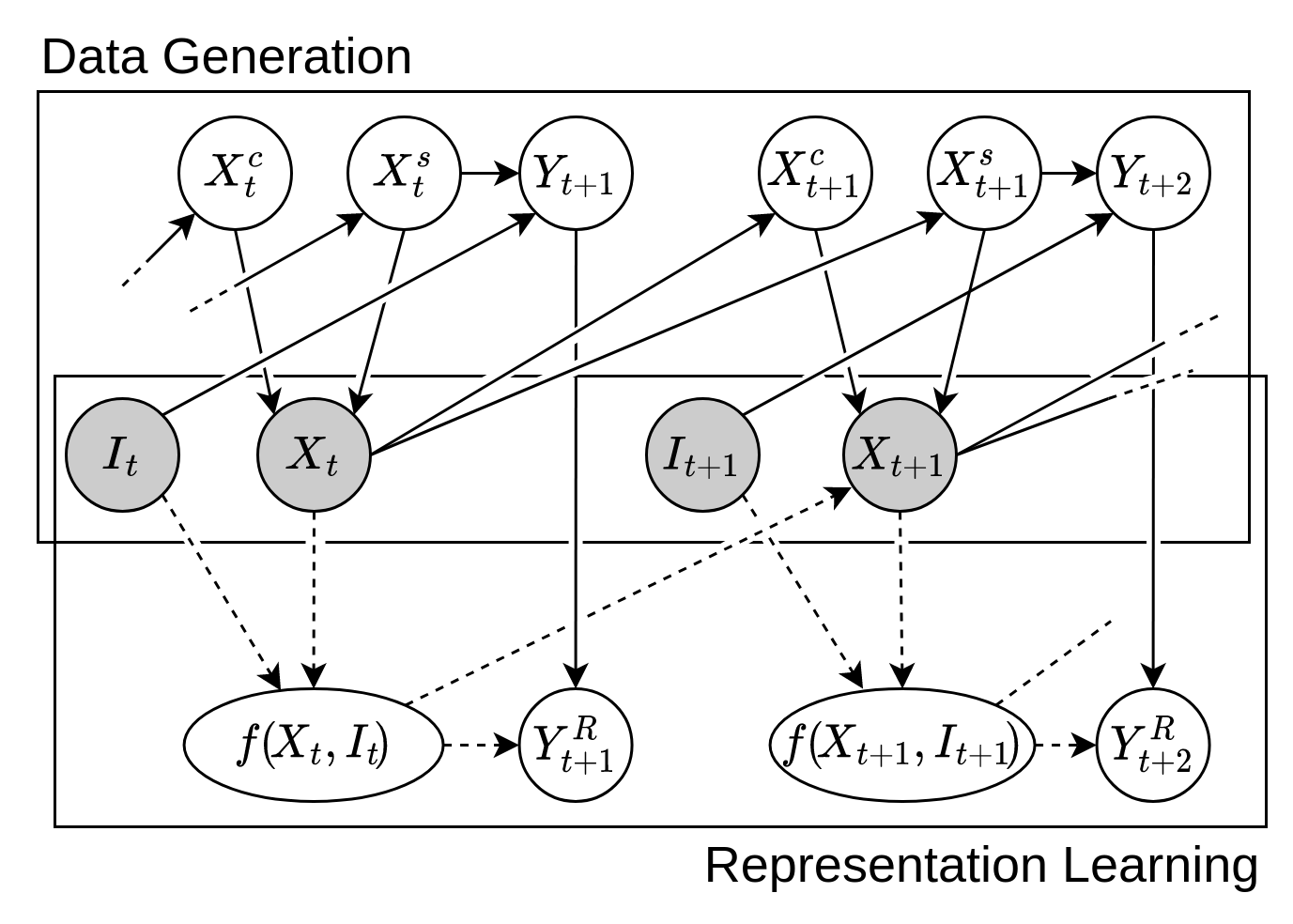}}
\caption{Our causal graph with the inclusion of the representation learning components as in \citet{mitrovic2021representation}. \revision{Solid arrows represent the causal relationships. Dashed arrows represent what is used to learn (in the case of $f(X_t, I_t)$) or predict (in the case of $\yref{t+1}$) the corresponding random variables.}}
\label{fig:causal}
\end{center}
\vskip -0.2in
\end{figure}
We consider $\yref{t+1}$ to be the most-fine-grained refinement task, which corresponds to classifying each (hint) instance individually, that is, a contrastive learning objective where we want to distinguish each hint from all others. This represents the \emph{most-fine-grained refinement}, because $\yref{t+1}(a)=\yref{t+1}(a')\Longleftrightarrow a=a'$, by definition.  
Our goal is to learn $f(X_t, I_t)$ to be an invariant predictor of $\yref{t+1}$ under changes (interventions) of $\xc{t}$. Thus, given a step $t \in [1 \dots T]$, for all $x, x' \in \mathcal{X}^c_t$, we want $f(X_t, I_t)$ such that
\begin{align}
\label{eq:invariantfx}
    p^{\text{do}(\xc{t})=x}(\yref{t+1} \vert f(X_t, I_t)) = p^{\text{do}(\xc{t})=x'}(\yref{t+1} \vert f(X_t, I_t)),
\end{align}
where $p^{\text{do}(\xc{t})}$ is the interventional distribution and $\mathcal{X}^c_t$ denotes the domain of $\xc{t}$. Since we do not have access to $\xc{t}$, as it is unobserved (\revision{it is a white node in} \Cref{fig:data-gen,fig:causal}), we cannot explicitly intervene on it. Thus, we simulate interventions on $\xc{t}$ through data augmentation.

As we are interested in being invariant to appropriate size changes, we design a data augmentation procedure tailored for neural algorithmic reasoning, which mimics interventions changing the size of the input. Given a current snapshot of the algorithm on a given input, the data augmentation procedure should produce an augmented input which is larger, but on which the execution of the current step is going to proceed identically.

For example, a valid augmentation in bubble sort at a certain step consists of adding new elements to the tail of the input list, since the currently-considered swap will occur (or not) regardless of any elements added there. Thus, the valid augmentations for the bubble sort algorithm at a given step are all possible ways to add items in such a way that ensures that the one-step execution is unaffected by this addition.

To learn an encoder $f(X_t, I_t)$ that satisfies \Cref{eq:invariantfx}, we propose to explicitly enforce invariance under valid augmentations. Such augmentations, as discussed, provide us with diverse inputs with an identical intermediate execution step. 

Specifically, we use the ReLIC objective~\citep{mitrovic2021representation} as a regularisation term, which we adapt to our causal graph as follows. Consider a time step, $t$, and let $\mathcal{D}_t$ be the dataset containing the snapshots at time $t$ for all the inputs. Let $i_t, j_t \in I_t$ be two indices, and denote by $a_{lk} = (a_l, a_k) \in \mathcal{A}_{x_t} \times \mathcal{A}_{x_t}$ a pair of augmentations, with $\mathcal{A}_{x_t}$ the set of all possible valid augmentations at $t$ for $x_t$ (which simulate the interventions on $\xc{t}$). 

The objective function to optimise becomes:
\begin{align} \label{eq:contrastive}
    & \mathcal{L}_t = \nonumber \\
    & \!\! - \!\! \sum_{x_t \in \mathcal{D}_t} \!\! \Big( \sum_{i_t} \sum_{a_{lk}} \log \frac{\exp{(\phi(f(x^{a_l}_t, i_t), f(x^{a_k}_t, i_t)))}}{\sum_{j_t\neq i_t}\exp{(\phi(f(x^{a_l}_t, i_t), f(x^{a_k}_t, j_t)))}} \nonumber \\
    &\qquad - \alpha \sum_{a_{lk}, a_{qm}} \text{KL}(p^{\text{do}(a_{lk})}, p^{\text{do}(a_{qm})})\Big)
\end{align}
with $x^{a}_t$ the data augmented with augmentation $a$, and $\alpha$ a weighting of the KL divergence penalty. 
The first term represents a contrastive objective where we compare a hint representation in $x^{a_l}_t$, namely $f(x^{a_l}_t, i_t)$, with all the possible representations in $x^{a_k}_t$, $f(x^{a_k}_t, j_t)$. Note that this is different from standard contrastive objectives, where negative examples are taken from the batch. Due to space constraints, we expand on the derivation of \Cref{eq:contrastive} in \Cref{app:contrastive}.

In practice, we consider only one augmentation per graph, which is equivalent to setting $a_l$ to the identity transformation.
\revision{
Consequently, the hint representation in the original graph $f(x^{a_l}_t, i_t)$ is regularised to be similar to the hint representation in the augmentation $f(x^{a_k}_t, i_t)$ and dissimilar to all other possible representations in the augmentation $f(x^{a_k}_t, j_t)$, $j_t \neq i_t$. Similarly, the hint representation in the augmentation $f(x^{a_k}_t, i_t)$ is regularised to be similar to the hint representation in the original graph $f(x^{a_l}_t, i_t)$ and dissimilar to all other possible representations in the original graph $f(x^{a_l}_t, j_t)$, $j_t \neq i_t$.
}

We follow the standard setup in contrastive learning and implement $\phi(f(x^{a_l}_t, i_t), f(x^{a_k}_t, i_t)) = \langle\,h(f(x^{a_l}_t), i_t), h(f(x^{a_k}_t, i_t))\,\rangle / \tau$
with $h$ a fully-connected neural network and $\tau$ a temperature parameter.
Finally, we use a KL penalty to ensure invariance in the probability distribution across augmentations. This is a requirement for satisfying the assumptions of our key theoretical result.
\begin{figure}[t]
\vskip 0.2in
\begin{center}
\centerline{\includegraphics[width=\columnwidth]{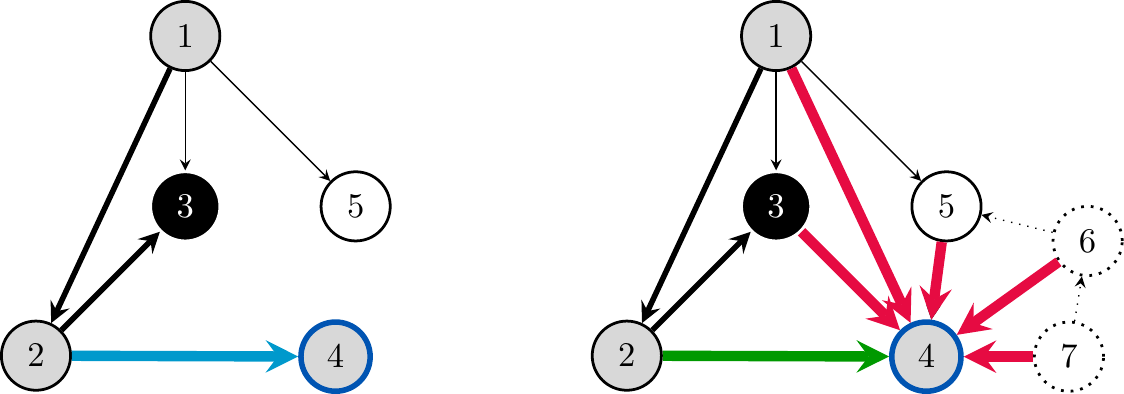}}
\caption{Example of applying our data augmentation and contrastive loss, following the example in Figure \ref{fig:key}. An input graph (left) is augmented by adding nodes and edges (right), such that the next step---making $2$ the parent of $4$, i.e. $\pi_4 = 2$---remains the same. The representation of the pair $(4, 2)$ is hence contrasted against all other representations of pairs $(4, u)$ in the augmented graph. In other words, the green edge is the \emph{positive} pair to the blue edge, with other edges (in red) being \emph{negative} pairs to it.}
\label{fig:scc}
\end{center}
\vskip -0.2in
\end{figure}

\paragraph{Example.}
To better understand \Cref{eq:contrastive}, we provide an example illustrated in \Cref{fig:scc}. We will consider one of the algorithms in CLRS-30---Kosaraju's strongly connected component (SCC) algorithm \citep{aho1974design}---which consists of two invocations of depth-first search (DFS). 

Let $G=(V,E)$ be an input graph to the SCC algorithm. Further, assume that at step $t$, the algorithm is visiting a node $v\in V$. We will focus on the prediction of the \emph{parent} of $v$: the node from which we have reached $v$ in the current DFS invocation. Note that, in practice, this is a classification task where node $v$ decides which of the other nodes is its parent. Accordingly, given a particular node $v$, our model computes a representation for every other node $u\in V$. This representation is which is then passed through a final classifier, outputting the (unnormalised) probability of $u$ being the parent of $v$.  

Now, consider any augmentation of $G$'s nodes and edges that does not disrupt the current step of the search algorithm, denoted by $G^a = (V^a, E^a)$. For example, as the DFS implementation in CLRS-30 prefers nodes with a smaller \texttt{id} value, a valid augmentation can be obtained by adding nodes with a larger \texttt{id} than $v$ to $V^a$, and adding edges from them to $v$ in $E^a$ \revision{(dashed nodes and edges in \Cref{fig:scc})}. Note that this augmentation does not change the predicted parent of $v$. We can enforce that our representations respect this constraint by using our regularisation loss in \cref{eq:contrastive}.

Given a node $v\in V$, we denote the representation of its parent node, $\pi_v\in V$ by $f(G, (v, \pi_v))$. This representation is contrasted to \emph{all} other representations of nodes $w\in V^a$ in the augmented graph, that is $f(G^a, (v, w))$.\footnote{Note that, in this case, $I_t$ is a two-dimensional index, choosing two nodes---i.e., an edge---at once.}

More precisely, the most similar representation of $f(G, (v, \pi_v))$ is the representation \emph{in the augmentation} of the parent of $v$, $f(G^a, (v, \pi_v))$, while the representations associated to all other nodes (including the added ones) represent the negative examples $f(G^a, (v, w))$, for $w \neq \pi_v$.

\revision{\Cref{fig:scc} illustrates the prediction of the parent of node $v=4$. In this case, $i_t$ in \cref{eq:contrastive} indexes the true parent of node $4$, namely $\pi_4 = 2$, and therefore $i_t = (4,2)$, while $j_t$ iterates over all other possible indices of the form $(4,u)$, $u \in V^a$, indeed representing \emph{all other possible parents} of $4$. The objective of \cref{eq:contrastive} is to make the true parent representation in the original graph $f(G, (4,2))$ similar to the true parent representation in the augmentation $f(G^{a}, (4,2))$, and dissimilar to the representations of the other possible parents in the augmentation $f(G^{a}, (4,u))$, $u \in V^a$. The same process applies to the augmentation.
}

\subsection{Implications of the invariance}
In the previous subsection, we have presented a self-supervised objective, justified by our assumed causal graph, in order to learn invariant predictors for a refinement task $\yref{t+1}$ under changes of $\xc{t}$. However, our initial goal was to ensure invariance in the prediction of algorithmic hints $\y{t+1}$ across $\xc{t}$. Now we will bridge these two aims.

In the following, we show how learning a representation that is an invariant predictor of $\yref{t+1}$ under changes of $\xc{t}$ represents a \emph{sufficient} condition for this representation to be invariant to $\xc{t}$ when predicting $\y{t+1}$.

\begin{restatable}{theorem}{invariancetheo}
\label{theo:inv}
Consider an algorithm and let $t \in [1 \dots T]$ be one of its steps.
Let $\y{t+1}$ be the task representing a prediction of the algorithm step and let $\yref{t+1}$ be a refinement of such task. If $f(X_t, I_t)$ is an invariant representation for $\yref{t+1}$ under changes in $\xc{t}$, then $f(X_t, I_t)$ is an invariant representation for $\y{t+1}$ under changes in $\xc{t}$, that is, for all $x, x' \in \mathcal{X}^c_t$, the following holds:
\begin{align*}
    &p^{\text{do}(\xc{t})=x}(\yref{t+1} \vert f(X_t, I_t)) = p^{\text{do}(\xc{t})=x'}(\yref{t+1} \vert f(X_t, I_t)) \\
    &\implies
    \\
    &p^{\text{do}(\xc{t})=x}(\y{t+1} \vert f(X_t, I_t)) = p^{\text{do}(\xc{t})=x'}(\y{t+1} \vert f(X_t, I_t)).
\end{align*}
\end{restatable}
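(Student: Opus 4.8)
Here the only structural fact I would use is that a refinement collapses the finer task onto the coarser one: since $\yref{t+1}$ is a refinement of $\y{t+1}$, the defining implication $\yref{t+1}(a)=\yref{t+1}(a')\Rightarrow\y{t+1}(a)=\y{t+1}(a')$ says exactly that, viewing both tasks as deterministic maps on their common domain $\mathcal{A}$ (the space of indexed snapshots), $\y{t+1}$ \emph{factors through} $\yref{t+1}$: there is a map $h$, well defined on the range of $\yref{t+1}$, with $\y{t+1}=h\circ\yref{t+1}$, i.e.\ $\y{t+1}=h(\yref{t+1})$ as random variables. I would establish this first, and note that this relation is purely structural — it is a property of the two task functions, not of any distribution — so it continues to hold under every interventional distribution $p^{\text{do}(\xc{t})=x}$; this is precisely the independence-of-mechanisms/modularity assumption already invoked to obtain \Cref{eq:invariantx}, since $\xc{t}$ is not a parent of the mechanism producing $\yref{t+1}$ (and hence $\y{t+1}$) in \Cref{fig:causal}.

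Next I would express the interventional conditional of $\y{t+1}$ as a pushforward of that of $\yref{t+1}$ under $h$. Fix any $x\in\mathcal{X}^c_t$ and any $y$ in the range of $\y{t+1}$. Marginalising over $\yref{t+1}$ and using that $\y{t+1}=h(\yref{t+1})$ is \emph{conditionally deterministic} given $\yref{t+1}$ — so that $\y{t+1}\perp f(X_t,I_t)\mid\yref{t+1}$, and conditioning further on $f(X_t,I_t)$ adds nothing — the law of total probability gives
\[
p^{\text{do}(\xc{t})=x}(\y{t+1}=y\mid f(X_t,I_t)) \;=\; \sum_{y':\,h(y')=y} p^{\text{do}(\xc{t})=x}(\yref{t+1}=y'\mid f(X_t,I_t)),
\]
with the sum replaced by an integral over the fibre $h^{-1}(y)$ in the non-discrete case (measurability of $h$ is routine). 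The identical identity holds with $x'$ in place of $x$.

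Finally I would invoke the hypothesis. Since $p^{\text{do}(\xc{t})=x}(\yref{t+1}\mid f(X_t,I_t)) = p^{\text{do}(\xc{t})=x'}(\yref{t+1}\mid f(X_t,I_t))$, the two right-hand sides above agree term by term, so summing over the fibre $h^{-1}(y)$ preserves the equality; the left-hand sides therefore coincide for every $y$, which is exactly $p^{\text{do}(\xc{t})=x}(\y{t+1}\mid f(X_t,I_t)) = p^{\text{do}(\xc{t})=x'}(\y{t+1}\mid f(X_t,I_t))$. I expect the main point to handle with care — rather than anything technically deep — to be the pushforward step: one must be explicit that (i) $\y{t+1}$ is a deterministic function of $\yref{t+1}$ \emph{alone}, which is what lets the marginalisation go through after conditioning on the representation, and (ii) this mechanism, along with the marginalisation identity it induces, is unaffected by $\text{do}(\xc{t})$; everything else is bookkeeping with the law of total probability.
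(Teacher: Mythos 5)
Your proof is correct and follows essentially the same route as the paper's: marginalise over $\yref{t+1}$, note that the mechanism linking $\yref{t+1}$ to $\y{t+1}$ depends neither on $f(X_t, I_t)$ (by the refinement property) nor on the intervention $\text{do}(\xc{t})$ (independence of mechanisms), and then apply the hypothesised invariance of $p^{\text{do}(\xc{t})}(\yref{t+1}\,\vert\,f(X_t,I_t))$. The only cosmetic difference is that you instantiate the conditional $p(\y{t+1}\,\vert\,\yref{t+1})$ explicitly as a deterministic pushforward $\y{t+1}=h(\yref{t+1})$ (summing over fibres $h^{-1}(y)$), whereas the paper keeps it as a generic intervention-invariant conditional inside the same chain of equalities.
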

We prove \Cref{theo:inv} in \Cref{app:theory}. Note that this justifies our self-supervised objective: by learning invariant representations though a refinement task, we can also guarantee invariance in the hint prediction. In other words, we can \emph{provably} ensure that the prediction of an algorithm step is not affected by changes in the input that do not interfere with the current execution step. Since we can express these changes in the form of \emph{addition} of input nodes, we are ensuring that the hint prediction is the same on two inputs of different sizes, but identical current algorithmic step.

\section{Experiments}
We conducted an extensive set of experiments to answer the following main questions: \begin{enumerate}
    \item \emph{Can our model, \ourmethod, which relies on the addition of our causality-inspired self-supervised objective, outperform the corresponding base model in practice?}
    \item \emph{What is the importance of such objective when compared to other changes made with respect to the previous state-of-the-art model?}
    \item \emph{How does \ourmethod{} compare to a model which does not leverage hints at all, directly predicting the output from the input? Are hints necessary?}
\end{enumerate}

\paragraph{Model.} As a base model, we use the Triplet-GMPNN architecture proposed by \citet{ibarz2022a}, which consists of a fully-connected MPNN~\cite{gilmer2017neural} where the input graph is encoded in the edge features, augmented with gating and triplet reasoning \citep{dudzik2022graph}.

We replace the loss for predicting the next-step hint in the base model with our regularisation objective (\Cref{eq:contrastive}), which aims at learning hint representations that are \emph{invariant to size changes that are irrelevant to the current step} via constrastive and KL losses.

We make an additional change with respect to the base model, consisting of including the \emph{reversal} of hints of \texttt{pointer} type. More specifically, given an input graph, if a node $A$ points to another node $B$ in the graph, we include an additional (edge-based) hint representing the pointer from $B$ to $A$. This change (which we refer to as \textbf{reversal} in the results) consists simply in the inclusion of these additional hints, and we study the impact of this addition in \Cref{exp:ablation}. The resulting model is what we call \ourmethod.

\begin{figure*}[t]
\vskip 0.2in
\begin{center}
\centerline{\includegraphics[width=\textwidth]{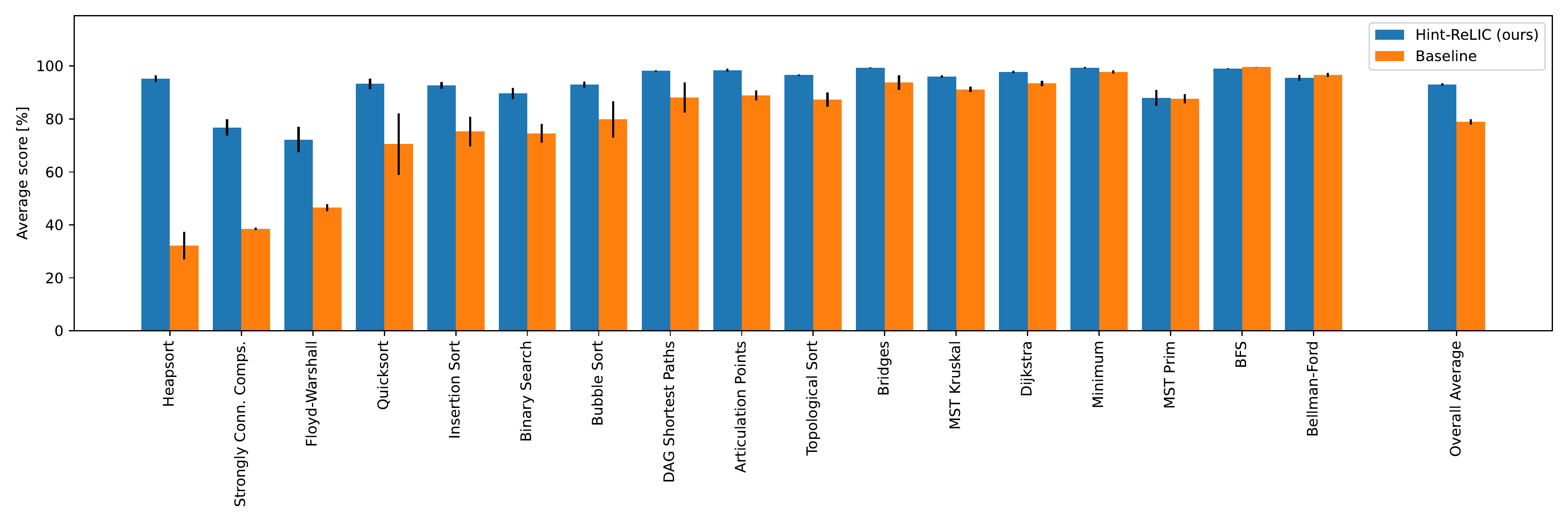}}
\caption{Per-algorithm comparison of the Triplet-GMPNN baseline \citep{ibarz2022a} and our \ourmethod. Error bars represent
the standard error of the mean across three random seeds. The final column shows the average and standard error of the mean performances across the different algorithms.}
\label{fig:old-vs-new}
\end{center}
\vskip -0.2in
\end{figure*}

\paragraph{Data augmentations.}
To simulate interventions on $\xc{t}$ and learn invariant representations, we design augmentation procedures which construct augmented data given an input and an algorithm step, such that the step of the algorithm is the same on the original input and on the augmented data. 

We consider simple augmentations, which we describe in detail in \Cref{app:augm}. To reduce the computational overhead, given an input graph, instead of sampling an augmentation at each algorithm step, we sample a single step, $\Tilde{t} \sim \mathcal{U}\{1,T\} $, and construct an augmentation only for the sampled step. Then, we use the (same) constructed augmentation in all the steps \emph{until} the sampled one, $t \leq \Tilde{t}$. This follows from the consideration that, if augmentations are carefully constructed, the execution of the algorithm is the same not only in the next step but in all steps leading up to that.

Whenever possible, we relax the requirement of having the augmentation with \emph{exactly} the same execution, and we allow for approximate augmentations, in order to avoid over-engineering the methodology and obtain a more robust model. This results in more general and simpler augmentations, though we expect more tailored ones to perform better. We refer the reader to \Cref{app:augm} for more details.

We end this paragraph by stressing that we \emph{never} run the target algorithm on the augmented inputs: rather, we directly construct them to have the same next execution step as the corresponding inputs. As a result, our method does not require direct access to the algorithm used to generate the inputs. Furthermore, the number of nodes in our augmentations is at most one more than the number of nodes in the largest training input example. This means that, in all of our experiments, we still never significantly cross the intended test size distribution shift during training.

\paragraph{Datasets.}
We run our method on a diverse subset of the algorithms present in the CLRS benchmark consisting of:
\begin{enumerate*}
    \item \emph{DFS-based algorithms} (Articulation Points, Bridges, Strongly Connected Components \citep{aho1974design}, Topological Sort \citep{knuth1973fundamental});
    \item \emph{Other graph-based algorithms} (Bellman-Ford \citep{bellman1958routing}, BFS \citep{moore1959shortest}, DAG Shortest Paths, Dijkstra \citep{dijkstra1959note}, Floyd-Warshall \citep{floyd1962algorithm}, MST-Kruskal \citep{kruskal1956shortest}, MST-Prim \citep{prim1957shortest});
    \item \emph{Sorting algorithms} (Bubble Sort, Heapsort \citep{williams1964algorithm}, Insertion Sort, Quicksort \citep{hoare1962quicksort});
    \item \emph{Searching algorithms} (Binary-search, Minimum).
\end{enumerate*}
This subset is chosen as it contains most algorithms suffering from out-of-distribution performance drops in current state-of-the-art; see \citet[Table 2]{ibarz2022a}.

\paragraph{Results.} \Cref{fig:old-vs-new} compares the \gls{ood} performances of the Triplet-GMPNN baseline, which we have re-trained and evaluated in our experiments, to our model \ourmethod, as described above. \ourmethod{} performs better or comparable to the existing state-of-the-art baseline,
showcasing how the proposed procedure appears to be beneficial not only theoretically, but also in practice.
The most significant improvements can be found in the sorting algorithms, where we obtain up to $3\times$ increased performance.

\subsection{Ablation study}
\label{exp:ablation}
In this section we study the contribution and importance of two main components of our methodology.
First, we consider the impact of the change we made with respect to the original baseline proposed in \citet{ibarz2022a}, namely the inclusion of the reversal of hint pointers. Second, as we propose a novel way to leverage hints through our self-supervised objective, which is different from the direct supervision in the baseline, one may wonder whether completely removing hints can achieve even better scores. Thus, we also study the performance when completely disregarding hints and directly going from input to output. Finally, we refer the reader to \Cref{app:exp} for additional ablation experiments, including the removal of the KL component in \Cref{eq:contrastive}---which is necessary for the theoretical results but may not always be needed in practice.

\paragraph{The effect of the inclusion of pointers' reversal.}
\begin{table}[t]
\centering
\caption{\emph{Effect of the inclusion of pointers' reversal on each algorithm.} The table shows mean and stderr of the \gls{ood} micro-F$_{1}$ score after $10{,}000$ training steps, across different seeds.}
\vskip 0.1in
\tiny
\begin{tabular}{lccc}
\toprule
\textbf{Alg.} & \textbf{Baseline} & \textbf{Baseline + reversal } & \textbf{\ourmethod{} (ours)} \\
\midrule
Articulation points &  $ 88.93\% \pm 1.92$ & $ 91.04\% \pm  0.92 $ &  ${\bf 98.45}\% \pm  0.60$ \\
Bridges & $ 93.75\% \pm  2.73 $ & $ 97.70\% \pm  0.34 $ & $ {\bf 99.32}\% \pm  0.09$ \\
SCC &  $ 38.53\% \pm  0.45 $ & $ 31.40\% \pm  8.80  $ & $ {\bf 76.79}\% \pm  3.04 $\\
Topological sort & $ 87.27\% \pm 2.67 $ & $ 88.83\% \pm  7.29  $ & $ {\bf 96.59}\% \pm  0.20 $\\
\midrule
Bellman-Ford & $ {\bf 96.67}\% \pm  0.81 $ & $ 95.02\% \pm  0.49 $ & $ 95.54\% \pm  1.06 $ \\
BFS & $ 99.64\% \pm  0.05 $ & $ {\bf 99.93}\% \pm  0.03 $ & $ 99.00\% \pm  0.21 $ \\
DAG Shortest Paths & $ 88.12\% \pm  5.70 $ & $ 96.61\% \pm  0.61 $ & $  {\bf 98.17}\% \pm  0.26 $ \\
Dijkstra & $ 93.41\% \pm  1.08 $ & $ 91.50\% \pm  1.85 $ & $ {\bf 97.74}\% \pm  0.50 $ \\
Floyd-Warshall & $ 46.51\% \pm  1.30  $ & $ 46.28\% \pm  0.80 $ & $ {\bf 72.23}\% \pm  4.84 $\\
MST-Kruskal & $ 91.18\% \pm  1.05 $ & $ 89.93\% \pm  0.43 $ & $ {\bf 96.01}\% \pm  0.45 $ \\
MST-Prim & $ 87.64\% \pm  1.79 $ & $ 86.95\% \pm  2.34$ & $ {\bf 87.97}\% \pm  2.94 $ \\
\midrule
Insertion sort & $ 75.28\% \pm  5.62 $ & $ 87.21\% \pm  2.80 $ & $ {\bf 92.70}\% \pm  1.29 $\\
Bubble sort & $ 79.87\% \pm  6.85 $ & $ 80.51\% \pm  9.10 $ & $ {\bf 92.94}\% \pm  1.23  $\\
Quicksort & $ 70.53\% \pm  11.59 $ & $ 85.69\% \pm  4.53 $ & $ {\bf 93.30}\% \pm  1.96 $\\
Heapsort & $ 32.12\% \pm  5.20 $ & $ 49.13\% \pm  10.35 $ & $ {\bf 95.16}\% \pm  1.27 $\\
\midrule
Binary Search & $ 74.60\% \pm  3.61 $ & $ 50.42\% \pm  8.45 $ & $ {\bf 89.68}\% \pm  2.13 $ \\
Minimum  & $ 97.78\% \pm  0.63 $ & $ 98.43\% \pm  0.01 $ & $  {\bf 99.37}\% \pm  0.20 $ \\
\bottomrule
\end{tabular}
\label{tab:test_results_inv_point}
\end{table}
As discussed above, pointers' reversal simply consists of adding an additional hint for each hint of \texttt{pointer} type (if any), such that a node not only has the information representing which other node it points to, but also from which nodes it is pointed by.
We study the impact of this inclusion by running the baseline with these additional hints, and evaluate its performance against both the baseline and our \ourmethod.
\Cref{tab:test_results_inv_point} shows that this addition, which we refer to as \textbf{Baseline + reversal}, indeed leads to improved results for certain algorithms, but does not obtain the predictive performances we reached with our regularisation objective.

\paragraph{The removal of hints.}
\begin{table}[t]
\centering
\caption{\emph{Importance of hint usage in the final performance.} The table shows mean and stderr of the \gls{ood} micro-F$_{1}$ score after $10{,}000$ training steps, across different seeds.
}
\vskip 0.1in
\tiny
\begin{tabular}{lcc}
\toprule
\textbf{Alg.} & \textbf{No Hints} & \textbf{\ourmethod{} (ours)} \\
\midrule
Articulation points & $ 81.97\% \pm  5.08 $ & $ {\bf 98.45}\% \pm  0.60$ \\
Bridges & $ 95.62\% \pm  1.03 $ & $ {\bf 99.32}\% \pm  0.09$ \\
SCC & $57.63\% \pm  0.68$ & $ {\bf 76.79}\% \pm  3.04 $\\
Topological sort & $84.29\% \pm  1.16$ & $ {\bf 96.59}\% \pm  0.20 $\\
\midrule
Bellman-Ford & $ 93.26\% \pm  0.04 $ & $ {\bf 95.54}\% \pm  1.06 $ \\
BFS & $ {\bf 99.89}\% \pm  0.03 $ & $ 99.00\% \pm  0.21 $ \\
DAG Shortest Paths & $ 97.62\% \pm  0.62 $ & $ {\bf 98.17}\% \pm  0.26 $ \\
Dijkstra & $ 95.01\% \pm  1.14 $ & $ {\bf 97.74}\% \pm  0.50 $ \\
Floyd-Warshall & $ 40.80\% \pm  2.90 $ & $ {\bf 72.23}\% \pm  4.84 $\\
MST-Kruskal & $ 92.28\% \pm  0.82 $ & $ {\bf 96.01}\% \pm  0.45 $ \\
MST-Prim & $ 85.33\% \pm  1.21 $ & $ {\bf 87.97}\% \pm  2.94 $ \\
\midrule
Insertion sort & $ 77.29\% \pm  7.42 $ & $ {\bf 92.70}\% \pm  1.29 $\\
Bubble sort & $ 81.32\% \pm  6.50 $ & $ {\bf 92.94}\% \pm  1.23  $\\
Quicksort & $ 71.60\% \pm  2.22 $ & $ {\bf 93.30}\% \pm  1.96 $\\
Heapsort & $ 68.50\% \pm  2.81 $ & $ {\bf 95.16}\% \pm  1.27 $\\
\midrule
Binary Search & $ {\bf 93.21}\% \pm  1.10 $ & $ 89.68\% \pm  2.13 $ \\
Minimum & $ 99.24\% \pm  0.21 $ & $  {\bf 99.37}\% \pm  0.20 $ \\
\bottomrule
\end{tabular}
\label{tab:test_results_nohints}
\end{table}
While previous works directly included the supervision on the hint predictions, we argue in favour of a novel way of leveraging hints. We use hints first to construct the augmentations
representing the same algorithm step, and then we employ their representations in the self-supervised objective. An additional valid model might consist of a model that directly goes from input to output and completely ignores hints. In \Cref{tab:test_results_nohints} we show that this \textbf{No Hints} model can achieve very good performances, but it is still generally outperformed by \ourmethod.

\section{Conclusions}

In this work we propose a self-supervised learning objective that employs augmentations derived from available hints, which represent intermediate steps of an algorithm, as a way to better ground the execution of GNN-based algorithmic reasoners on the computation that the target algorithm performs. Our \ourmethod{} model, based on such self-supervised objective, leads to algorithmic reasoners that produce more robust outputs of the target algorithms, especially compared to autoregressive hint prediction. 
In conclusion, hints can take you a long way, if used in the right way.

\section*{Acknowledgements}
The authors would like to thank Andrew Dudzik and Daan Wierstra for valuable feedback on the paper. They would also like to show their gratitude to the Learning at Scale team at DeepMind for a supportive atmosphere. 

\bibliography{example_paper}
\bibliographystyle{icml2023}

\newpage
\appendix
\onecolumn

\section{Derivation of our Self-Supervised Objective}
\label{app:contrastive}
\Cref{eq:contrastive} represents our objective function to optimise, which we derived by adapting the ReLIC objective \citep{mitrovic2021representation} to our causal graph. Note that we propose a unique and novel way of employing a contrastive-learning based objective, which is also different from \citet{mitrovic2021representation}, as we consider positive and negative examples for each hint representation in an input graph to be hint representations in valid augmentations of the graph.
To better understand the equation, in this section we expand the derivation of our objective.

Recall that our goal is to learn $f(X_t, I_t)$ to be an invariant predictor of $\yref{t+1}$ under changes (interventions) of $\xc{t}$. As we do not have access to $\xc{t}$, because we do not know which subset of the input forms $\xc{t}$, we simulate interventions on $\xc{t}$ through data augmentations.
Therefore, our goal becomes to learn $f(X_t, I_t)$ to be an invariant predictor of $\yref{t+1}$ under (valid) augmentations, that is:
\begin{align*}
    p^{\text{do}(a_i)}(\yref{t+1} \vert f(X_t, I_t)) = p^{\text{do}(a_j)}(\yref{t+1} \vert f(X_t, I_t)), \quad \forall a_i, a_j \in \mathcal{A}_{x_t}
\end{align*}
where $\mathcal{A}_{x_t}$ contains all possible valid augmentations at $t$ for $x_t$, and $p^{\text{do}(a)}$ represents the simulation of the intervention on $\xc{t}$ through data augmentation $a$.
Following \citet{mitrovic2021representation}, we enforce this invariance through a regularisation objective, which for every time step $t$ has the following form:
\begin{align*}
    \expec_{X_t} \expec_{\substack{a_{lk}, a_{qm} \\ \sim \mathcal{A}_{x_t} \times \mathcal{A}_{x_t}}} \sum_{i_t} \sum_{b \in \{a_{lk}, a_{qm}\}} \hat{\mathcal{L}}_b(f(X_t, i_t), \yref{t+1}=i_t) \quad \text{s.t.} \quad \text{KL}(p^{\text{do}(a_{lk})}(\yref{t+1} \vert f(X_t, i_t)), p^{\text{do}(a_{qm})}(\yref{t+1} \vert f(X_t, i_t))) \leq \rho,
\end{align*}
for some small number $\rho$.
Since we consider $\hat{\mathcal{L}}$ to be a contrastive learning objective, we take pairs of hint representations, indexed by $i_t$ and $j_t$, to compute similarity scores and use pairs of augmentations $a_{lk} = (a_l, a_k) \in \mathcal{A}_{x_t} \times \mathcal{A}_{x_t}$, that is
\begin{align*}
    p^{\text{do}(a_{lk})}(\yref{t+1} = j_t \vert f(x_t, i_t)) \propto \exp{(\phi(f(x^{a_l}_t, i_t), f(x^{a_k}_t, j_t)))},
\end{align*}
where $f$ is a neural network and $\phi$ is a (learnable) function to compute the similarity between two representations.
Note that, in words, we are computing the similarity of two hint representations (indexed by $i_t$ and $j_t$, respectively) in two data augmentations (obtained from $a_l$ and $a_k$). Now, note that we want the representations of the same hint to be similar in the two augmentations. This means that the hint representation indexed by $i_t$ in $x^{a_l}_t$ must be similar to the hint representation indexed by $i_t$ in $x^{a_k}_t$. Obviously, the same must be true when considering $j_t$ instead of $i_t$. Furthermore, we want representations of different hints to be dissimilar in the two augmentations. 
Putting all together, our objective function at time $t$ can be rewritten as
\begin{align*}
  \mathcal{L}_t =
     -  \sum_{x_t \in \mathcal{D}_t} \!\! \Big( \sum_{a_{lk}} \sum_{i_t} \log \frac{\exp{(\phi(f(x^{a_l}_t, i_t), f(x^{a_k}_t, i_t)))}}{\sum_{j_t\neq i_t}\exp{(\phi(f(x^{a_l}_t, i_t), f(x^{a_k}_t, j_t)))}} 
     - \alpha \sum_{a_{lk}, a_{qm}} \text{KL}(p^{\text{do}(a_{lk})}, p^{\text{do}(a_{qm})})\Big),  
\end{align*}
where $\mathcal{D}_t$ is the dataset containing the snapshots at time $t$ for all the inputs, $i_t, j_t \in I_t$ are two indices, $a_{lk} = (a_l, a_k) \in \mathcal{A}_{x_t} \times \mathcal{A}_{x_t}$ is a pair of augmentations, with $\mathcal{A}_{x_t}$ the set of all possible valid augmentations at $t$ for $x_t$ (which simulate the interventions on $\xc{t}$). Finally, $\alpha$ is the weighting of the KL divergence penalty and $p^{\text{do}(a_{lk})}$ is a shorthand for $p^{\text{do}(a_{lk})}(\yref{t+1} \vert f(x_t, i_t))$.

We note here that $j_t$ is such that the hint representations of $i_t$ and $j_t$ are actually different. In the SCC example in the main text (visualised in \Cref{fig:scc}), the sum over $j_t$ is a sum over all other possible parents of the node $v$ under consideration. \revision{In a DFS's execution, when the hint under consideration is the colour of a node in an input graph, its representation is regularised to be similar to the hint representation of the same node in the augmentation, and dissimilar to all other hint representations in the augmentation corresponding to \emph{different} colours.}

\section{Causal Graph and Representation Learning Components}
In this section we expand on the definition of our causal graph and on the representation learning components, justifying design choices that were left implicit in the main paper due to space constraints. Specifically, we first explain thoroughly the causal relationships among the random variables and then stress how we use those variables in a learning setting.

Recall that our causal graph (\Cref{fig:data-gen}) describes the data generation process of an algorithmic trajectory. We denote by $X_1$ the random variable representing the input to our algorithm, and refer by $X_{t}$ the snapshot at time step $t$ of the algorithm execution on such input, for every time step in the trajectory $t \in [1 \dots T]$.

We assume $X_{t}$ to be generated by two random variables, $\xc{t}$ and $\xs{t}$, which we assume to be \emph{distinct} parts (or splits) of $X_t$, which \emph{together} form the whole $X_t$. We consider $\xc{t}$ to be the part of the snapshot that does not influence the current execution of the algorithm at time step $t$, and can therefore be arbitrarily different without affecting it. We instead denote by $\xs{t}$ the part of the snapshot that determines the current execution of the algorithm at time step $t$, and therefore should not be changed if we do not want to alter the current step execution.

The current execution is represented as hint values on all nodes and/or edges.
We call $\y{t+1}$ the execution (or hint) \emph{on a specific node or edge} chosen accordingly to an index $I_t$. 
Note that the current step hint $\y{t+1}$ is represented with an increment of the time step, following a convention we adopt to indicate that we first need an execution and only then (in the next time step) we materialise its results. Further, note that $\y{t+1}$ represents the algorithmic step in any node or edge, indicating whether or not it is involved in the current execution, thus either encoding that there is no change (and thus it is not involved in the current step) or representing what is its new hint value. 

By definition of $\xc{t}$ and $\xs{t}$, the current step of the algorithm on the node or edge indexed by $I_t$, namely $\y{t+1}$, is determined by $\xs{t}$ only. 

Assuming a Markovian execution, executing one algorithm step gives us $\xc{t+1}$ and $\xs{t+1}$, which form the snapshot we observe $X_{t+1}$. Note that  $\xc{t+1}$ and $\xs{t+1}$ are potentially different from $\xc{t}$ and $\xs{t}$, because the current execution might now be determined by very different subsets. Finally, note that we do not need an arrow from $\y{t+1}$ to $\xc{t}$ or $\xs{t}$, because $\y{t+1}$ is \emph{deterministically} determined by $\xs{t}$, and therefore all its information can be recovered from $\xs{t}$.

Recall now that our goal is to learn an invariant predictor for the refinement task across changes of $\xc{t}$, as this represents a sufficient condition for the invariance in the prediction of $\y{t+1}$ (see \Cref{theo:inv}). We denote by $\yref{t+1}$ the refinement task of the execution on a specific node or edge chosen according to $I_t$.
We omit the arrow from $I_t$ to $\yref{t+1}$ as the dependency is already implicit through $Y_{t+1}$.

We denote by $f(X_t, I_t)$ the representation of the execution step on a particular node or edge indexed by $I_t$, which we learn to predict $\yref{t+1}$ across changes of $\xc{t}$. Finally, note that $f(X_t, I_t)$ is used by the network to determine the predicted next snapshot, which is determined by the next step prediction, and therefore it has a (dashed) arrow to $X_{t+1}$.

\section{Assumptions on Prior Knowledge of the Unobserved $\xc{t}$}
\revision{
Given a time step $t$, to ensure invariant predictions of $\y{t+1}$ we learn predictors of $\yref{t+1}$ that are invariant across interventions on $\xc{t}$, simulated through data augmentations. However, since $\xc{t}$ is an unobserved random variable, we must make assumptions about its properties to create valid data augmentations. In this section, we clarify our assumptions about prior knowledge of $\xc{t}$ and propose potential methods to eliminate this assumption in future work. 

We start by remarking that our neural network is not assumed to have any prior knowledge about $\xc{t}$. This knowledge is enforced into the network through our regularisation objective (\cref{eq:contrastive}), which is driven by appropriately chosen data augmentations. Indeed, those data augmentations do rely on priors that assume something about what $\xc{t}$ might look like. This is similar to how the choice of data augmentation in image CNNs governs which parts of the image we consider to be ``content'' and ``style''.

However, for most algorithms of interest, the required priors are conceptually very simple, and a single augmentation may be reused for many algorithms. As an example, for many graph algorithms, it is an entirely safe operation to add disconnected subgraphs -- an augmentation we repeatedly employ. Similarly in several sorting tasks, adding elements to the tail end of the list represent a valid augmentation. We provide an exhaustive list of the augmentation for each algorithm in \Cref{app:augm}.

Performing data augmentations without knowledge of $\xc{t}$ represents an interesting but challenging direction, that can be explored in future work. A simple, computationally-expensive, data-augmentation procedure that does not require any knowledge of $\xc{t}$ could consist in randomly augmenting the input graph, run the actual algorithm and consider the generated graph as a valid augmentation \emph{only if} the next step execution of the algorithm remains unaltered. A more interesting approach would consist in learning valid augmentations of a given input, perhaps by meta-learning conserved quantities in the spirit of Noether Networks \citep{alet2021noether}. Investigating these cases remains an important avenue for future research.
}

\section{Theoretical Analysis}
\label{app:theory}

\invariancetheo*
\begin{proof}[Proof of \Cref{theo:inv}.]
\begin{align*}
    &p^{\text{do}(\xc{t})=x}(\y{t+1} \vert f(X_t, I_t)) \\
    &= \int p^{\text{do}(\xc{t})=x}(\y{t+1} \vert \yref{t+1}) p^{\text{do}(\xc{t})=x}(\yref{t+1} \vert f(X_t, I_t)) d \yref{t+1} \\
    &= \int p(\y{t+1} \vert \yref{t+1}) p^{\text{do}(\xc{t})=x}(\yref{t+1} \vert f(X_t, I_t)) d \yref{t+1}
    \\
    &= \int p(\y{t+1} \vert \yref{t+1}) p^{\text{do}(\xc{t})=x'}(\yref{t+1} \vert f(X_t, I_t)) d \yref{t+1}
    \\
    &= p^{\text{do}(\xc{t})=x'}(\y{t+1} \vert f(X_t, I_t)).
\end{align*}
The first equality is obtained by marginalising over $\yref{t+1}$ and using the assumption of $\yref{t+1}$ being a refinement of $\y{t+1}$, which implies that $\yref{t+1}$ has all the necessary information to predict $\y{t+1}$ (and thus we can drop the conditioning on $f(X_t, I_t)$). The second equality follows from the fact that the mechanism $\y{t+1} \vert \yref{t+1}$ is independent of interventions on $\xc{t}$ under our assumptions. Finally, the third equality follows from the assumption that $f(X_t, I_t)$ is an invariant predictor of $\yref{t+1}$ under changes in $\xc{t}$.
\end{proof}

\section{Data Augmentations}
\label{app:augm}
In this section we expand on our proposed augmentations, which simulate interventions on $\xc{t}$, valid \emph{until} step $t$. Further, we report which hints we use in our objective (see \Cref{eq:contrastive}) using the naming convention in \citet{velivckovic2022clrs}.

\paragraph{DFS-based algorithms (Articulation Points, Bridges, Strongly Connected Components, Topological Sort).} We construct exact augmentations for these kinds of problems. First, we sample a step by choosing uniformly at random amongst those where we enter a node for the first time (in case multiple DFSs are being executed for an input, we only consider the first one). Then, we construct a subgraph of nodes with larger node-ids, and we randomly determine connectivity between subgraph's nodes. Finally, we connect all the subgraph's nodes to the node we are entering in the sampled step.
We contrast the following hints up to the sampled step (we mask out the contrastive loss on later steps): \texttt{pi\_h} in Articulation Points and Bridges; \texttt{scc\_id\_h}, \texttt{color} and \texttt{s\_prev} in Strongly Connected Components; and \texttt{topo\_h}, \texttt{color}, and \texttt{s\_prev} in Topological Sort.

\paragraph{Graph-based algorithms (Bellman-Ford, BFS, DAG Shortest Path, Dijkstra, Floyd-Warshall, MST-Kruskal, MST-Prim).} We construct simple but exact augmentations consisting of adding a disconnected subgraph to each input's graph. The subgraph consists of nodes with larger nodes ids and whose connectivity is randomly generated. We contrast until the end of the input's trajectory the following hints: \texttt{pi\_h} in Bellman-Ford, BFS, Dijkstra and MST-Prim; \texttt{pi\_h}, \texttt{topo\_h}, \texttt{color} in DAG shortest path; \texttt{Pi\_h} in Floyd-Warshall; \texttt{pi} in MST-Kruskal.

\paragraph{Sorting algorithms (Insertion sort, Bubble Sort, Quicksort, Heapsort).} We construct general augmented inputs obtained by simply adding items at the end of each input array. We consider as trajectories for those augmentations the ones of the corresponding inputs. We note that those do not correspond to exact augmentations for all sorting algorithms, but only for Insertion Sort. Indeed, running the executor of one of the other algorithms would yield potentially different trajectories than those we consider. However, since we use the inputs' trajectories, our regularisation aims at learning to be invariant to added nodes that do not contribute to each currently considered step.
We contrast until the end of the input's trajectories the hints \texttt{pred\_h}, and \texttt{parent} for Heapsort, and \texttt{pred\_h} for all the other algorithms.

\paragraph{Searching algorithms (Binary Search, Minimum).} We construct general augmented inputs obtained by simply adding random numbers (different than the searched one) at the end of the input array. For those augmentations, we consider as trajectories the ones of the corresponding input arrays, whose hints are contrasted until the end of the trajectories themselves. We remark that running the searching algorithms on such augmentations could potentially lead to ground-truth trajectories different than those of the inputs. However, since we consider as trajectories for the augmentations the inputs' ones, the contrastive objective is still valid, and can be seen as pushing the hint representations to be invariant to messages coming from nodes that are not involved in the current computation. We run our model by allowing the network to predict the predecessor of every array's item, namely \texttt{pred\_h}, at every time step and use its representation in our regularisation loss (in other words, we do not run with the static hint elimination of \citet{ibarz2022a}).

\section{Experiments}

\begin{figure*}[t]
\vskip 0.2in
\begin{center}
\centerline{\includegraphics[width=\textwidth]{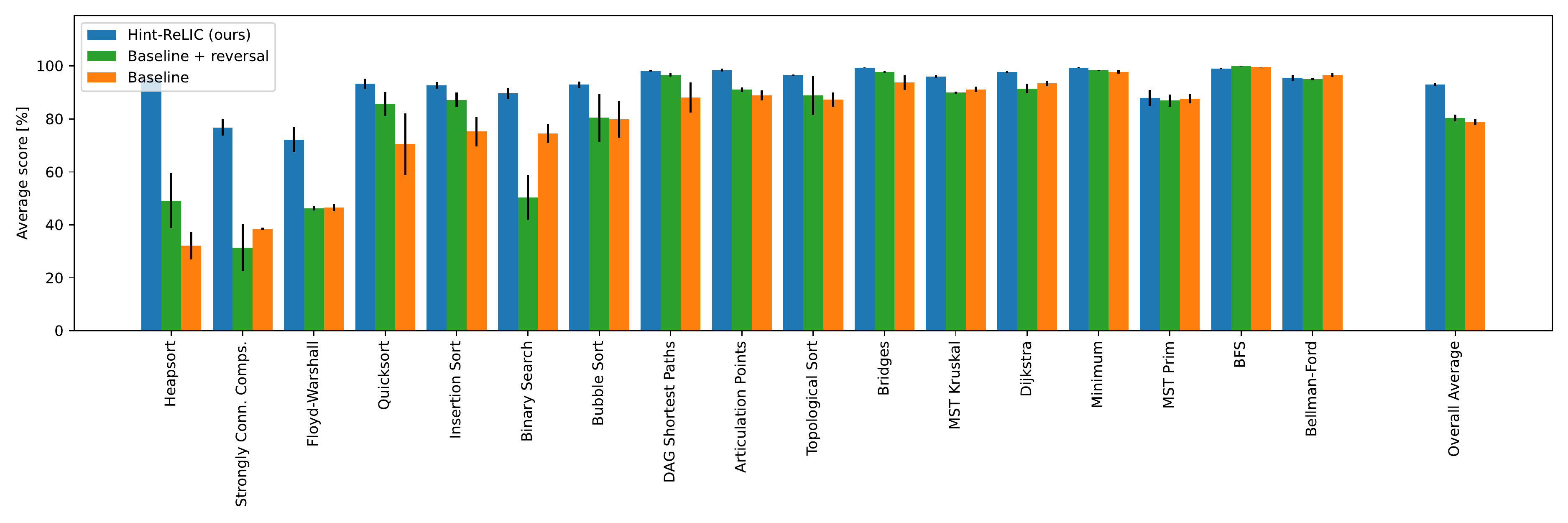}}
\caption{Per-algorithm comparison of the Triplet-GMPNN baseline \citep{ibarz2022a}, its augmented version which includes pointers' reversal and our \ourmethod{}. Error bars represent
the standard error of the mean across three random seeds. The final column shows the average and standard error of the mean performances across the different algorithms.}
\label{fig:old-vs-new-vs-oldinv}
\end{center}
\vskip -0.2in
\end{figure*}

\subsection{Additional experiments}
\label{app:exp}
\Cref{tab:test_results_grouping} contains a comprehensive set of experiments, including the performances of the \textbf{No Hints}, \textbf{Baseline}, \textbf{Baseline + reversal} models, as discussed in the main text. The column \textbf{Baseline + reversal + contr. + KL} represents our \ourmethod{} model, which is obtained with the additional inclusion of the contrastive and KL losses (see \Cref{eq:contrastive}).
Additionally, we report performances of our model when \emph{removing} the KL divergence loss (setting $\alpha=0$ in \Cref{eq:contrastive}), namely \textbf{Baseline + reversal + contr.}. By comparing \textbf{Baseline + reversal + contr.} to \textbf{Baseline + reversal + contr. + KL}, we can see that, even if the KL penalty produces some gain for certain algorithms, it does not represent the component leading to the most improvement.
Finally, \Cref{tab:test_results_grouping} also reports the scores obtained in the DFS algorithm, which appears to be solved by the inclusion of the pointers' reversal. We do not run our contrastive objective on such algorithm as there is no additional improvement to be made.

Finally, to further evaluate the impact of the pointers' reversal, we report the performances of \ourmethod{} without the inclusion of such additional hints. As can be seen in \Cref{tab:test_results_oursnoinv}, the pointers' reversal helps stabilise our model, especially in the sorting algorithms. We remark however how only including those pointers' reversal into a baseline model does not produce the performances of our model (see column \textbf{Baseline + reversal} in \Cref{tab:test_results_grouping} and \Cref{fig:old-vs-new-vs-oldinv}).

\begin{table*}[t]
\centering
\caption{\emph{Comparison of performances for different models, with last column representing our proposed method \ourmethod.} Table shows mean and stderr of \gls{ood} micro-F$_{1}$ score after $10{,}000$ training steps, across different seeds.
}
\vskip 0.1in
\small
\begin{tabular}{lccccc}
\toprule
\textbf{Alg.} & \textbf{No Hints} & \textbf{Baseline} & \textbf{Baseline} & \textbf{Baseline} & \textbf{Baseline} \\
& & & \textbf{+ reversal}&  \textbf{+ reversal + contr.}& \textbf{+ reversal + contr. + KL} \\
\midrule
Articulation points & $ 81.97\% \pm  5.08 $ & $ 88.93\% \pm 1.92$ & $ 91.04\% \pm  0.92 $ & $ {\bf 98.91}\% \pm  0.34 $ & $ 98.45\% \pm  0.60$ \\
Bridges & $ 95.62\% \pm  1.03 $ & $ 93.75\% \pm  2.73 $ & $ 97.70\% \pm  0.34 $ & $ 98.14\% \pm  2.00 $ & $ {\bf 99.32}\% \pm  0.09$ \\
DFS & $ 33.94\% \pm  2.57 $ & $ 39.71\% \pm  1.34 $ & $ {\bf 100.00}\% \pm  0.00 $ & $ - $ & $ - $ \\
SCC & $57.63\% \pm  0.68$ & $ 38.53\% \pm  0.45 $ & $ 31.40\% \pm  8.80  $ & $ 75.78\% \pm  1.25 $ & $ {\bf 76.79}\% \pm  3.04 $\\
Topological sort & $84.29\% \pm  1.16$ & $ 87.27\% \pm 2.67 $ & $ 88.83\% \pm  7.29  $ & $ 95.44\% \pm  0.52 $ & $ {\bf 96.59}\% \pm  0.20 $\\
\midrule
Bellman-Ford & $ 93.26\% \pm  0.04 $ & $ {\bf 96.67}\% \pm  0.81 $ & $ 95.02\% \pm  0.49 $ & $ 95.26\% \pm  0.92 $ & $ 95.54\% \pm  1.06 $ \\
BFS & $ 99.89\% \pm  0.03 $ & $ 99.64\% \pm  0.05 $ & $ {\bf 99.93}\% \pm  0.03 $ & $ 98.41\% \pm  0.39 $ & $ 99.00\% \pm  0.21 $ \\
DAG Shortest Paths & $ 97.62\% \pm  0.62 $ & $ 88.12\% \pm  5.70 $ & $ 96.61\% \pm  0.61 $ & $ 97.31\% \pm  0.51 $ & $ {\bf 98.17}\% \pm  0.26 $ \\
Dijkstra & $ 95.01\% \pm  1.14 $ & $ 93.41\% \pm  1.08 $ & $ 91.50\% \pm  1.85 $ & $ 97.22\% \pm  0.12 $ & $ {\bf 97.74}\% \pm  0.50 $ \\
Floyd-Warshall & $ 40.80\% \pm  2.90 $ & $ 46.51\% \pm  1.30  $ & $ 46.28\% \pm  0.80 $ & $ 71.43\% \pm  2.64 $ & $ {\bf 72.23}\% \pm  4.84 $\\
MST-Kruskal & $ 92.28\% \pm  0.82 $ & $ 91.18\% \pm  1.05 $ & $ 89.93\% \pm  0.43 $ & $ 95.18\% \pm  1.29 $ & $ {\bf 96.01}\% \pm  0.45 $ \\
MST-Prim & $ 85.33\% \pm  1.21 $ & $ 87.64\% \pm  1.79 $ & $ 86.95\% \pm  2.34$ & $ {\bf 89.23}\% \pm  1.23 $ & $ 87.97\% \pm  2.94 $ \\
\midrule
Insertion sort & $ 77.29\% \pm  7.42 $ & $ 75.28\% \pm  5.62 $ & $ 87.21\% \pm  2.80 $ & $ {\bf 95.06}\% \pm  1.33 $ & $ 92.70\% \pm  1.29 $\\
Bubble sort & $ 81.32\% \pm  6.50 $ & $ 79.87\% \pm  6.85 $ & $ 80.51\% \pm  9.10 $ & $ {\bf 94.09}\% \pm  0.80 $ & $ 92.94\% \pm  1.23  $\\
Quicksort & $ 71.60\% \pm  2.22 $ & $ 70.53\% \pm  11.59 $ & $ 85.69\% \pm  4.53 $ & $ 90.54\% \pm  2.49 $ & $ {\bf 93.30}\% \pm  1.96 $\\
Heapsort & $ 68.50\% \pm  2.81 $ & $ 32.12\% \pm  5.20 $ & $ 49.13\% \pm  10.35 $ & $ 89.41\% \pm  4.79  $ & $ {\bf 95.16}\% \pm  1.27 $\\
\midrule
Binary Search & $ {\bf 93.21}\% \pm  1.10 $ & $ 74.60\% \pm  3.61 $ & $ 50.42\% \pm  8.45 $ & $ 87.50\% \pm  3.62 $ & $ 89.68\% \pm  2.13 $ \\
Minimum & $ 99.24\% \pm  0.21 $ & $ 97.78\% \pm  0.63 $ & $ 98.43\% \pm  0.01 $ & $ 99.54\% \pm  0.05 $ & $  {\bf 99.37}\% \pm  0.20 $ \\
\bottomrule
\end{tabular}
\label{tab:test_results_grouping}
\end{table*}

\begin{table}[t]
\centering
\caption{\emph{Importance of the inclusion of the pointers' reversal in our \ourmethod{}.} The table shows mean and stderr of the \gls{ood} micro-F$_{1}$ score after $10{,}000$ training steps, across different seeds.
}
\vskip 0.1in
\small
\begin{tabular}{lcc}
\toprule
\textbf{Alg.} & \textbf{\ourmethod{}} & \textbf{\ourmethod{}} \\
& & (no reversal) \\
\midrule
Articulation points & $ {\bf 98.45}\% \pm  0.60$ & $  97.33\% \pm  1.32 $\\
Bridges & $ 99.32\% \pm  0.09$ & $ {\bf 99.42}\% \pm  0.20 $\\
SCC &  $ 76.79\% \pm  3.04 $ & $ {\bf 81.42}\% \pm  2.68 $\\
Topological sort & $ {\bf 96.59}\% \pm  0.20 $ & $ 80.25\% \pm  3.03 $\\
\midrule
Bellman-Ford & $ {\bf 95.54}\% \pm  1.06 $ & $ 95.27\% \pm  0.97 $ \\
BFS & $ {\bf 99.00}\% \pm  0.21 $ & $ 98.23\% \pm  0.17 $ \\
DAG Shortest Paths & $ {\bf 98.17}\% \pm  0.26 $ & $ 89.23\% \pm  7.11 $ \\
Dijkstra & $ {\bf 97.74}\% \pm  0.50 $ & $ 96.70\% \pm  0.92 $ \\
Floyd-Warshall & $ {\bf 72.23}\% \pm  4.84 $ & $ 57.38\% \pm  1.75 $ \\
MST-Kruskal & $ {\bf 96.01}\% \pm  0.45 $ & $ 94.53\% \pm  0.40 $ \\
MST-Prim & $ {\bf 87.97}\% \pm  2.94 $ & $ 74.24\% \pm  10.85 $ \\
\midrule
Insertion sort & $ {\bf 92.70}\% \pm  1.29 $ & $ 67.80\% \pm  10.86 $\\
Bubble sort & $ {\bf 92.94}\% \pm  1.23 $ & $ 82.36\% \pm  6.88 $ \\
Quicksort & $ {\bf 93.30}\% \pm  1.96 $ & $ 74.32\% \pm  10.12 $ \\
Heapsort & $ {\bf 95.16}\% \pm  1.27 $ & $ 77.15\% \pm  4.73 $\\
\midrule
Binary Search & $ {\bf 89.68}\% \pm  2.13 $ & $ 86.65\% \pm  2.38 $ \\
Minimum & $ {\bf 99.37}\% \pm  0.20 $ & $ 98.91\% \pm  0.23 $ \\
\bottomrule
\end{tabular}
\label{tab:test_results_oursnoinv}
\end{table}

\subsection{Implementation details}
We use the best hyperparameters of the Triplet-GMPNN~\citep{ibarz2022a} base model, and we only reduce the batch size to 16. We set the temperature parameter $\tau$ to $1e-1$ and the weight of the KL loss $\alpha$ to 1. We implement the similarity function as $\phi(f(x^{a_l}_t, i_t), f(x^{a_k}_t, i_t)) = \langle\,h(f(x^{a_l}_t), i_t), h(f(x^{a_k}_t, i_t))\,\rangle / \tau$
with $h$ a two-layers MLP with hidden and output dimensions equal to the input one, and ReLU non-linearities.

\end{document}